\newtheorem{definition}{Definition}
\newtheorem{proposition}{Proposition}
\newcommand{\attrib}[1]{\mbox{\sc\lowercase{#1}\ }}
\newcommand{\type}[1]{\mbox{\it #1\/}}
\begin{document}

\title{A Logic-based Approach for Recognizing Textual Entailment
  Supported by Ontological Background Knowledge}

\author{\name Andreas Wotzlaw \email wotzlaw@informatik.uni-koeln.de \\
        \addr Institut f\"ur Informatik, Universit\"at zu K\"oln\\
        Weyertal 121, 50931 K\"oln, Germany
       \AND
       \name Ravi Coote \email ravi.coote@fkie.fraunhofer.de \\
       \addr Fraunhofer-Institut FKIE\\
       Fraunhoferstr. 20, 53343 Wachtberg, Germany}

\maketitle

\begin{abstract}
  We present the architecture and the evaluation of a new system for
  recognizing textual entailment (RTE). In RTE we want to identify
  automatically the type of a logical relation between two input
  texts. In particular, we are interested in proving the existence of an
  entailment between them. We conceive our system as a modular
  environment allowing for a high-coverage syntactic and semantic text
  analysis combined with logical inference. For the syntactic and
  semantic analysis we combine a deep semantic analysis with a shallow
  one supported by statistical models in order to increase the quality
  and the accuracy of results. For RTE we use logical inference of
  first-order employing model-theoretic techniques and automated
  reasoning tools. The inference is supported with problem-relevant
  background knowledge extracted automatically and on demand from
  external sources like, e.g., WordNet, YAGO, and OpenCyc, or other,
  more experimental sources with, e.g., manually defined presupposition
  resolutions, or with axiomatized general and common sense
  knowledge. The results show that fine-grained and consistent knowledge
  coming from diverse sources is a necessary condition determining the
  correctness and traceability of results.
\end{abstract}

\section{Introduction}
\label{sec:introduction} 
In this paper\footnote{Preliminary versions of different parts of this
  paper appeared in~\citeA{wotzlaw2010towards}
  and~\citeA{wotzlaw2010recognizing}.} we present a new, logic-based
system for {\em recognizing textual entailment} (RTE). Our aim is to
provide it as a robust, modular, and highly adaptable environment for a
linguistically motivated large-scale semantic text analysis. In RTE
(see~\citeR{nle2009specialrte}, for a good introduction) we want to
identify automatically the type of a logical relation between two
written texts. In particular, we are interested in proving the existence
of an entailment between them. The concept of {\em textual entailment}
indicates the state in which the semantics of a natural language written
text can be inferred from the semantics of another one. RTE requires a
processing at the lexical, as well as at the semantic and discourse
levels with an access to vast amounts of problem-relevant background
knowledge. RTE is one of the greatest challenges for any natural
language processing (NLP) system. If it succeeds with reasonable
accuracy, it is a clear indication for some thorough understanding of
how language works~\cite{bos2005recognising}. As a generic problem it
has many useful applications in NLP~\cite{giampiccolo2007thirdpascal}.
Interestingly, many application settings like, e.g., information
retrieval, paraphrase acquisition, question answering, or machine
translation can fully or partly be modeled as
RTE~\cite{bentivogli2009fifthrte}. Entailment problems between natural
language texts have been studied extensively in the last few years,
either as independent applications or as a part of more complex systems
(e.g., RTE Challenges, see~\citeR{bentivogli2009fifthrte}).

We try to solve a given RTE problem by applying a {\em model-theoretic}
approach where a {\em formal semantic representation} of the RTE problem
is computed and used in the subsequent logical analysis. In our setting,
we try to recognize the type of the logical relation between two English
input texts, i.e., between the text $T$ (usually several sentences) and
the hypothesis $H$ (one short sentence). The following definition
specifies more formally the inference problems we consider in RTE
(see~\citeR{sandt1992presupposition}, for a similar definition).

\begin{definition}
\label{def:rte}
Let $T$ be a text consisting of several sentences and $H$ a hypothesis
expressed by a sentence. Given a pair $\{T,H\}$, find answers to the
following, mutually exclusive conjectures with respect to the
background knowledge relevant both for $T$ and $H$:
\begin{enumerate}
\item $T$ entails $H$,
\item $T \wedge H$ is inconsistent, i.e., $T \wedge H$ contains some
  contradiction, or
\item $H$ is informative with respect to $T$, i.e., $T$ does not entail  
  $H$ and $T \wedge H$ is consistent (contains no contradiction). This 
  holds, e.g., when $H$ and $T$ are completely unrelated.
\end{enumerate}
\end{definition}  
  
However, in contrast to {\em automated deduction} systems (e.g.,
presented in~\citeR{akhmatova2005textual}), which compare the atomic
propositions obtained from text $T$ and hypothesis $H$ in order to
determine the existence of entailment, we apply {\em logical inference
  of first-order}, similar to~\cite{bos2005recognising}. To compute
adequate semantic representations for input problems, we build on a
combination of deep and shallow techniques for semantic analysis. The
main problem with approaches processing the text in a shallow fashion is
that they can be tricked easily, e.g., by negation, or systematically
replacing quantifiers. Also an analysis solely relying on some deep
approach may be jeopardized by a lack of fault tolerance or robustness
when trying to formalize some erroneous text (e.g., with grammatical or
orthographical errors) or a shorthand note. The main advantage when
integrating deep and shallow NLP components is the increased robustness
of deep parsing by exploiting information for words that are not
contained in the deep lexicon. The type of unknown words can then be
guessed, e.g., by usage of statistical models.

The semantic representation language we use in our RTE system for the
results of the deep-shallow analysis is a first-order fragment of {\em
  Minimal Recursion Semantics} (MRS,
see~\citeR{Copestake2005a}). However, for their further usage in the
logical inference, the MRS expressions are translated into another,
semantic equivalent representation of {\em First-Order Logic with
  Equality} (FOLE). This logical form with a well-defined
model-theoretic semantics was successfully applied for RTE
by~\citeR{bos2005recognising}.

An adequate representation of a natural language semantics requires
access to vast amounts of common sense and domain-specific world
knowledge. Many applications in modern information technology utilize
ontological\footnote{In this paper we mean by ontology any set of
  facts and/or axioms comprising potentially both individuals (e.g.,
  London) and concepts (e.g., city).} knowledge to increase their
performance and to improve the quality of results. This applies in
particular to the applications from the Semantic Web, but also to
other domains. For instance machine translation exploits lexical
knowledge~\cite{chatterjee2005resolving}, document classification uses
ontologies~\cite{ifrim2006transductive}, whereas question
answering~\cite{hunt2004gazetteers}, information
retrieval~\cite{bentivogli2009fifthrte}, and textual
entailment~\cite{Bos2006a} rely strongly on background
knowledge. There are also emerging trends towards entity- and
fact-oriented Web search which can build on rich knowledge
bases~\cite{cafarella2007structured,mline2007knowledge}. Furthermore,
ontological knowledge structures play an important role in information
integration in general~\cite{noy2005semantic}.

Unfortunately, the existing applications today use typically only one
source of background knowledge, e.g.,
WordNet~\cite{fellbaum1998wordnet} or Wikipedia. They could boost
their performance if a huge ontology with knowledge from several
sources was available. Such a knowledge base would have to be of high
quality and accuracy comparable with that of an encyclopedia. It
should include not only ontological concepts and lexical hierarchies
like those of WordNet, but also a great number of named entities (here
also referred to as individuals) like, e.g., people, geographical
locations, organizations, events, etc. Also other semantic relations
between them, e.g., who-was-born-when, which-language-is-spoken-in,
etc. should be comprised.

RTE systems need problem-relevant background knowledge to support
their proofs. The logical inference in our system is supported by
external background knowledge integrated automatically and only as
needed into the input problem in form of additional first-order
axioms. In contrast to already existing applications (see,
e.g.,~\citeR{curran2007lingu,bentivogli2009fifthrte}), our system
enables flexible integration of background knowledge from more than
one external source (see Section~\ref{ssec:knowledge} for details). In
its current implementation, our system supports RTE, but can also be
used for other NLP tasks like, e.g., large-scale syntactic and
semantic analysis of English texts, or multilingual information
extraction.

The rest of the paper is organized as follows. In
Section~\ref{sec:related}, we review some related works and compare
our method with the existing ones. In Section~\ref{sec:framework}, we
introduce the architecture of our system for RTE.  In
Section~\ref{sec:quality}, we show how the quality of the logic-based
inference process of our system can further be improved. Finally in
Section~\ref{sec:conclusions}, we discuss the evaluation results and
conclude the paper.

\section{Related Work.} 
\label{sec:related}
As a generic problem RTE has many applications in NLP which have been
studied extensively in the last few years. We refer the reader
to~\citeA{nle2009specialrte},~\citeA{bentivogli2009fifthrte},
and~\citeA{androutsopoulos2010survey} for good overviews. Our work was
mostly inspired by the ideas given by~\citeA{Blackburn2005a}, and
\citeA{bos2005recognising}, where a similar model-theoretic approach was
used for the semantic text analysis with logical inference. However, in
contrast to our logic-based approach, they applied another, more
discourse-oriented semantic formalism, {\em Discourse Representation
  Theory}~\cite{KampReyle1993}, for the computation of full semantic
representations. Furthermore, in our system the framework {\em Heart of
  Gold} by~\citeA{Schaefer2007a} was used as a basis for the semantic
analysis. For a good overview on a combined application of deep and
shallow NLP methods for RTE, we refer to~\citeA{Schaefer2007a},
and~\citeA{Bos2005a}. The application of logical inference techniques
for RTE was already elaborately presented
in~\citeA{blackburn1998automated,bos2005recognising},
and~\citeA{Bos2006a}. Moreover, an interesting discussion on the
importance of WordNet as a source of background knowledge for RTE can be
found in~\citeA{bos2005towards},
and~\citeA{Bos2006a}. \citeA{tatu2006logicbased} proposed for RTE a new
knowledge representation model and a logic proving setting with axioms
on demand. A detailed discussion on formal methods for the analysis of
the meaning of natural language expressions can be found
in~\citeA{bos2008argue}. Furthermore, a new approach for RTE using {\em
  natural logics} was given by~\citeA{maccartney2009extended}. The
authors propose a promising, annotation-based model of natural language
inference which identifies valid inferences by their lexical and
syntactic features, without full semantic interpretation. Note that in
the system presented in this paper we aim at computing a full semantic
representation of the input texts.

Finally, the inference process of our system applies ontological
knowledge coming from different external sources like, e.g., WordNet
or YAGO~\cite{suchanek2008yago}. In the last few years we have been
observing a reasonable growth of interest in huge ontologies and their
applications. There exists also a number of huge ontology integration
projects like, e.g., YAGO together with the Suggested Upper Model
Ontology (SUMO, see~\citeR{melo2008integrating}),
DBpedia~\cite{auer2007dbpedia}, or the Linking Open Data
Project~\cite{bizer2008linked} which aim is to extract and to combine
ontological data from many sources. Since YAGO is a part of those
ontology projects, it should be possible to integrate them (at least
partly) into the RTE application by applying the integration procedure
from Section~\ref{sec:quality}.

\section{System Architecture}
\label{sec:framework}
Our system for RTE provides the user with a number of essential
functionalities for syntactic, semantic, and logical textual analysis
which can selectively be overridden or specialized in order to provide
new or more specific ones, e.g., for anaphora resolution or word sense
disambiguation. In its initial form, the application supplies, among
other things, flexible software interfaces and transformation
components, allows for execution of a deep-shallow syntactic and
semantic analysis, integrates external inference machines and
background knowledge, maintains and controls the semantic analysis and
the inference process, and provides the user with a graphical
interface for control and presentation purposes.

In the following we describe our system for RTE in more detail. It
consists of three main modules (see Figure~\ref{fig:framework}):
\begin{enumerate}
\item {\em Syntactic and Semantic Analysis}, where the combined
  deep-shallow semantic analysis of the input texts is performed;
\item {\em Logical Inference}, where the logical deduction process is
  implemented (it is supported by two external components integrating
  external knowledge and inference machines);
\item {\em Graphical User Interface}, where the analytical process is
  supervised and its results are presented to the user.
\end{enumerate}
\begin{figure}[h]
\begin{center}
\includegraphics[width=10cm]{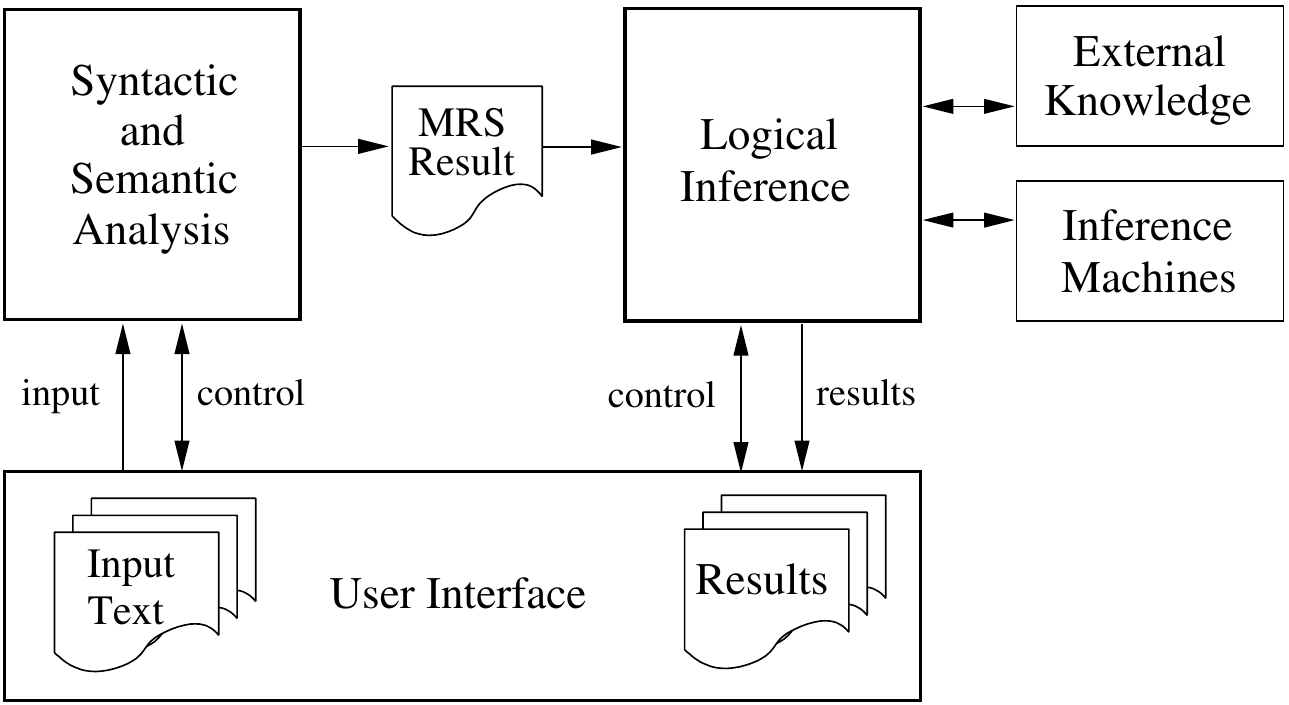}
\caption{Overall architecture of the system}
\label{fig:framework}
\end{center}
\end{figure}

In the following we describe the way the particular modules of the
system work. To make our description as comprehensible as possible, we
make use of a small RTE problem. More specifically, with its help we
show some crucial aspects of how our system proceeds while solving
some RTE problem.  We want to identify the logical relation between
text $T$:
\begin{center}
  \parbox{12cm}{\em Tower Bridge is one of the most recognizable
    bridges in the world. Many falcons inhabit its old roof nowadays.}
\end{center}
and hypothesis $H$:
\begin{center}
\parbox{12cm}{\em Birds live in London.}  
\end{center}
To prove the entailment automatically, among other things, a precise
semantic representation of the problem must be computed, the anaphoric
reference in $T$ must be resolved, and world knowledge (e.g., {\em
  Tower Bridge} is in {\em London}) and ontological relations between
the concepts (e.g., that {\em falcons} are {\em birds}) must be
provided to the logical inference. In the following we show how our
system proceeds while computing the first-order semantic
representation of the input texts above.

\subsection{Syntactic and Semantic Analysis} 
The texts of the input RTE problem after entering the system via the
user interface (see Figure~\ref{fig:framework}) go first through the
syntactic processing and semantic construction of the first system
module. To this end, they are analyzed by components of an XML-based
middleware architecture {\em Heart of Gold} (see
Figure~\ref{fig:syntax}). It allows for a flexible integration of
various shallow and deep linguistics-based and semantics-oriented NLP
components, and thus constitutes a sufficiently complex research
instrument for experimenting with novel processing strategies. Here,
we use its slightly modified standard configuration for English
centered around the English Resource HPSG Grammar (ERG,
see~\citeR{flickinger2000building}). The shallow processing is
performed through statistical or simple rule-based, typically
finite-state methods, possessing sufficient precision and recall. The
particular tasks are as follows: tokenization with the Java tool JTok,
part-of-speech tagging with a statistical tagger TnT
of~\citeA{brants2000tnt} trained for English on the Penn
Treebank~\cite{marcus1993building}, and named entity recognition with
SProUT~\cite{drozdz2004shallow}. The last one, by combining finite
state and typed feature structure technology, plays an important role
for the deep-shallow integration, i.e., it prepares the generic named
entity lexical entries for the deep HPSG parser PET
of~\citeA{callmeier2000pet}. This makes sharing of linguistic
knowledge among deep and shallow grammars natural and easy.
\begin{figure}
\begin{center}
\includegraphics[width=7cm]{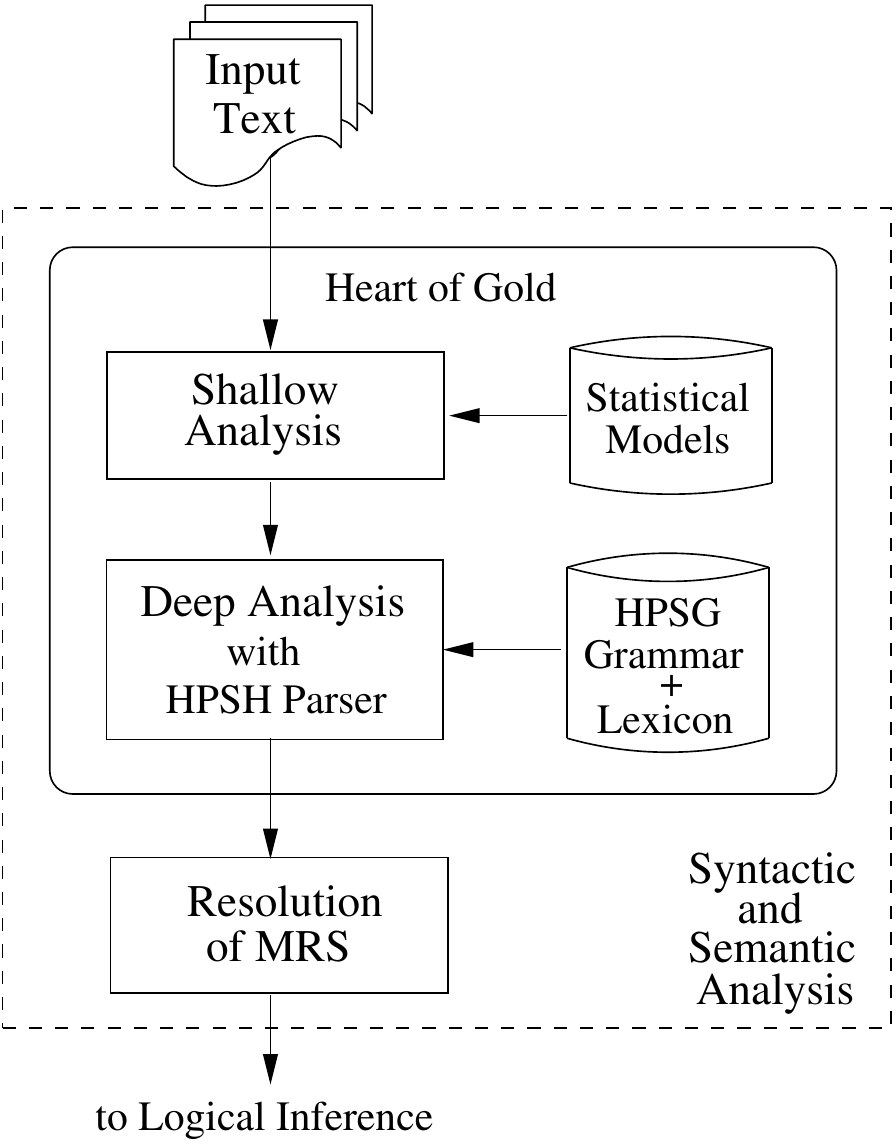}
\caption{Syntactic and semantic analysis}
\label{fig:syntax}
\end{center}
\end{figure}
PET is a highly efficient runtime parser for unification-based
grammars (like, e.g., ERG) and constitutes the core of the rule-based,
fine-grained deep semantic analysis. The integration of NLP components
is done either by means of an XSLT-based transformation, or with the
help of {\em Robust Minimal Recursion Semantics} (RMRS,
see~\citeR{copestake2003rmrs}), provided a given NLP component
supports it natively. RMRS is a generalization of MRS. It can not only
be underspecified for scope as MRS, but also partially specified,
e.g., when some parts of the text cannot be resolved by a given NLP
component. Thus, RMRS is well suited for representing output also from
shallow NLP components. This can be seen as a clear advantage over
approaches based strictly on some specified semantic representation
like those presented in, e.g.,~\citeA{blackburn1998automated},
or~\citeA{bos2005recognising}.

Furthermore, RMRS is a common semantic formalism for HPSG grammars
within the context of the {\em LinGO Grammar
  Matrix}~\cite{bender2002grammar}. Besides ERG, which we use for
English, there are also grammars for other languages like, e.g., the
Japanese HPSG grammar {\em JaCY}~\cite{siegel2002efficient}, the {\em
  Korean Resource Grammar}~\cite{jongbok2005parsing}, the {\em Spanish
  Resource Grammar} (SRG, see~\citeR{marimon2002integrating}), or the
proprietary German grammar~\cite{cramer2009construction}. Since all of
those grammars can be used to generate semantic representations in
form of RMRS, a replacement of ERG with another grammar in our system
can be considered and thus a higher degree of multilinguality
achieved. 

The combined results of the deep-shallow analysis in the RMRS form are
transformed into MRS and resolved with Utool 3.1
of~\citeA{koller2005efficient}. Utool translates first the input from
MRS into dominance constraints~\cite{thater2007minimal}, a closely
related scope underspecification formalism, and then enumerates all
readings represented by the dominance graph. In the current
implementation one of the most reasonable readings is chosen manually
by the analyst for the further processing in the logical
inference. For more detail on the production of first-order
expressions from a broad coverage HPSG grammar ERG we refer
to~\citeA{coote2010generation}.

For our small RTE example from the beginning of the section, the
result of the combined syntactic and semantic analysis for $H$ in form
of RMRS, given as attribute value matrix, is presented in
Figure~\ref{fig:rmrs}.
\begin{figure}
{\footnotesize
\[\left[\begin{array}{ll}
\attrib{TEXT}&\mbox{}\\
\attrib{TOP}&h1\\
\attrib{RELS}&\left\{\begin{array}{l}
	  
\left[\begin{array}{ll}
\multicolumn{2}{l}{\type{{\bf locname\_rel}}}\\
\attrib{LBL}&
\mbox{\it h1900108\/}
        \\
\attrib{ARG0}
&
\mbox{\it x1900108\/}
        \\
\attrib{CARG}&
\mbox{london}
\mbox{\it \/}
        \\
\attrib{ARG1}&
\mbox{\it x9\/}
        \\
            \end{array}\right]
\left[\begin{array}{ll}
\multicolumn{2}{l}{\type{{\bf loctype\_rel}}}\\
\attrib{LBL}&
\mbox{\it h1900110\/}
        \\
\attrib{ARG0}
&
\mbox{\it x1900110\/}
        \\
\attrib{CARG}&
\mbox{city}
\mbox{\it \/}
        \\
\attrib{ARG1}&
\mbox{\it x9\/}
        \\
            \end{array}\right]

\left[\begin{array}{ll}
\multicolumn{2}{l}{\type{udef\_q\_rel}}\\
\attrib{LBL}&
\mbox{\it h3\/}
        \\
\attrib{ARG0}
&
\mbox{\it x6\/}
{\tiny\begin{array}{l}\mbox{
     
}\end{array}}
        \\
\attrib{RSTR}&
\mbox{\it h5\/}
        \\
\attrib{BODY}&
\mbox{\it h4\/}
        \\
            \end{array}\right]
\left[\begin{array}{ll}
\multicolumn{2}{l}{\type{\_bird\_n}}\\
\attrib{LBL}&
\mbox{\it h7\/}
        \\
\attrib{ARG0}
&
\mbox{\it x6\/}
{\tiny\begin{array}{l}\mbox{
}\end{array}}
        \\
            \end{array}\right]
\\
\left[\begin{array}{ll}
\multicolumn{2}{l}{\type{\_live\_v}}\\
\attrib{LBL}&
\mbox{\it h8\/}
        \\
\attrib{ARG0}
&
\mbox{\it e2\/}
{\tiny\begin{array}{l}\mbox{
}\end{array}}
        \\
\attrib{ARG1}&
                  
\mbox{\it x6\/}
{\tiny\begin{array}{l}\mbox{
      
}\end{array}}
        \\
            \end{array}\right]

\left[\begin{array}{ll}
\multicolumn{2}{l}{\type{\_in\_p}}\\
\attrib{LBL}&
\mbox{\it h10001\/}
        \\
\attrib{ARG0}
&
\mbox{\it e10\/}
{\tiny\begin{array}{l}\mbox{
  
}\end{array}}
        \\
            
\attrib{ARG1}&
                  
\mbox{\it e2\/}
{\tiny\begin{array}{l}\mbox{
     
}\end{array}}
        \\
            
\attrib{ARG2}&
                  
\mbox{\it x9\/}
{\tiny\begin{array}{l}\mbox{
     
}\end{array}}
        \\
            \end{array}\right]
	  
\left[\begin{array}{ll}
\multicolumn{2}{l}{\type{proper\_q\_rel}}\\
\attrib{LBL}&
\mbox{\it h11\/}
        \\
\attrib{ARG0}
&
\mbox{\it x9\/}
{\tiny\begin{array}{l}\mbox{
  
}\end{array}}
        \\
            
\attrib{RSTR}&
                  
\mbox{\it h13\/}
        \\
            
\attrib{BODY}&
                  
\mbox{\it h12\/}
        \\
            \end{array}\right]
\left[\begin{array}{ll}
\multicolumn{2}{l}{\type{named\_rel}}\\
\attrib{LBL}&
\mbox{\it h14\/}
        \\
\attrib{ARG0}
&
\mbox{\it x9\/}
{\tiny\begin{array}{l}\mbox{
  
}\end{array}}
        \\
            
\attrib{CARG}&
                  
\mbox{London}
\mbox{\it \/}
        \\
            \end{array}\right]
	  \end{array}\right\}\\
  
\attrib{HCONS}&\{
        
\mbox{\it h5\/}
        
          \mbox{ qeq }
\mbox{\it h7\/}
        , 
\mbox{\it h13\/}
          \mbox{ qeq }
\mbox{\it h14\/}
        \}\\
\attrib{ING}&\{
\mbox{\it h8\/}
        \mbox{ ing }
\mbox{\it h10001\/}
        \}\\
\end{array}\right]\]
}
\caption{RMRS as attribute value matrix for hypothesis $H$ from the 
example}
\label{fig:rmrs}
\end{figure}
The results from the shallow analysis (marked bold) describe the named
entities from $H$. As described above, in the next step the structure
is transformed into MRS and resolved by Utool. The resulting
first-order MRS for the hypothesis $H$ from our example is given below
(in Prolog notation). The predicates with {\tt \_q\_}, {\tt \_n\_},
{\tt \_v\_}, and {\tt \_p\_} in their names represent quantifiers,
nouns, verbs, and prepositions, respectively.  {
\begin{verbatim}
udef_q_rel(X6, 
    bird_n_1_rel(X6), 
    proper_q_rel( X9, and(
        named_rel(X9, london), and(
        locname_rel(london, X9), 
        loctype_rel(city, X9))), and(
        live_v_1_rel(E2, X6), 
        in_p_dir_rel(E10, E2, X9)))).
\end{verbatim}
}

\subsection{Logical Inference} 
The results of the semantic analysis in form of specified MRS combining
deep-shallow predicates are translated into another, logical equivalent
semantic representation FOLE (see Figure~\ref{fig:semantics}). The
rule-based transformation conveys argument structure with a
neo-Davidsonian analysis with semantic roles~\cite{Dowty1989}. A
definite article is translated according to the theory of definite
description of~\citeA{Russell1905}. Temporal relations are modeled by
adding additional predicates similar to~\citeA{bos2005recognising},
and~\citeA{curran2007lingu}, i.e., without explicit usage of time
operators. Furthermore, it is possible to extend the translation
mechanism to cover plural and modal forms. Appropriate ideas can be
found in~\citeA{curran2007lingu} and~\citeA{Lohnstein1996}. In this
case, however, the complexity and amount of the resulting FOLE formulas
will grow rapidly, making the problem much harder to solve with the
currently available inference machines (see
Section~\ref{ssec:inference}).
\begin{figure}
\begin{center}
\includegraphics[width=14.4cm]{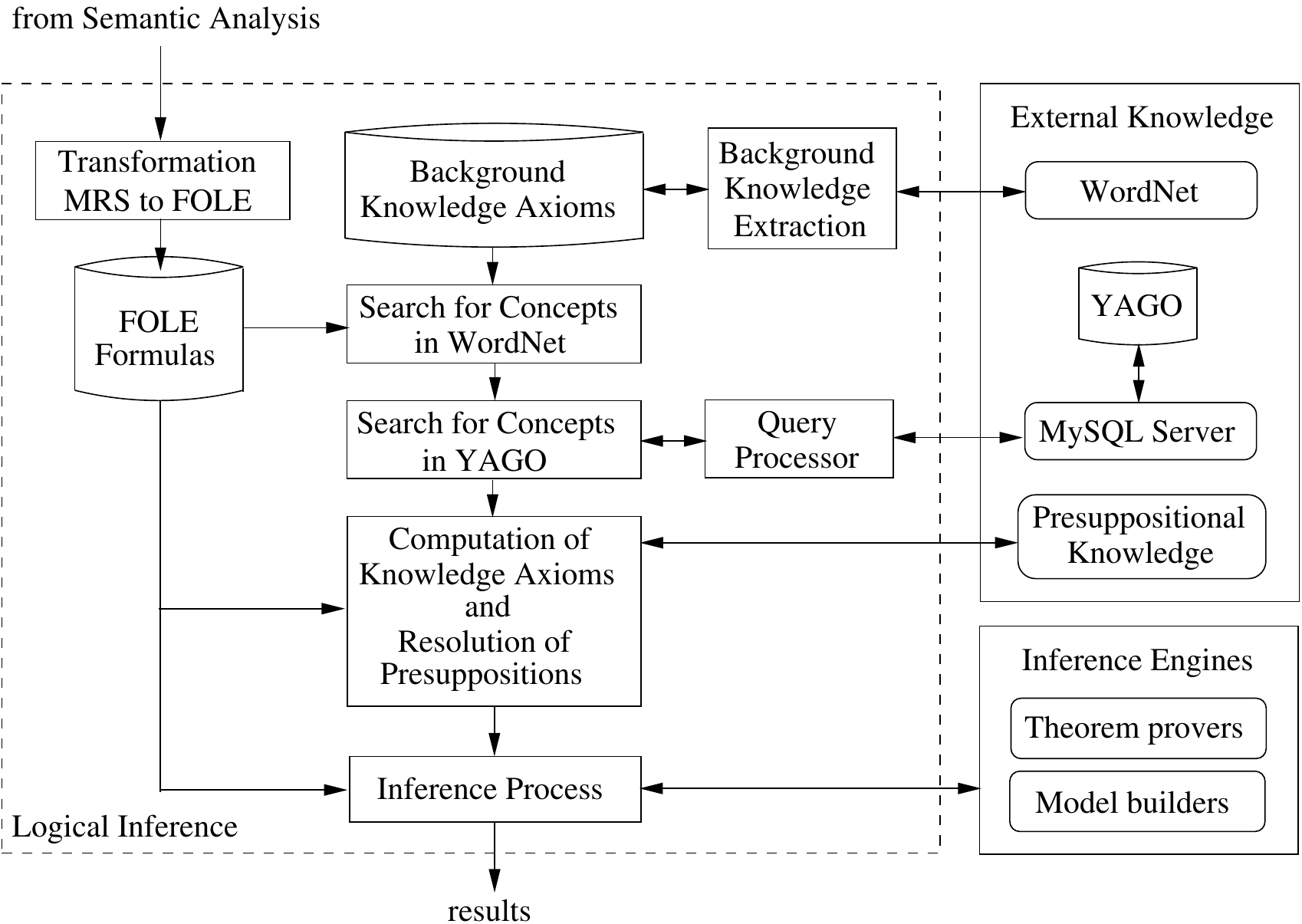}
\caption{Logical inference with external inference machines and
  background knowledge}
\label{fig:semantics}
\end{center}
\end{figure}

The translated FOLE formulas are stored locally and can be used for
the further analysis. Furthermore, such formally expressed input text
can and {\em should} be extended with additional knowledge in form of
{\em background knowledge axioms}. In our system, the additional
axioms are formulated in FOLE and integrated into the input
problem. The integration of background knowledge will be discussed in
detail in Section~\ref{sec:quality}.

As an example here, the translation of the specified MRS into FOLE for
the hypothesis $H$ from our example given earlier produces the
following formula with a neo-Davidsonian event representation:

{
\begin{verbatim}
some(X6,and(
    bird_n_1(X6),
    some(X9,and(and(
        named_r_1(X9),and(
        location_n_1(X9),and(
        london_loc_1(X9),
        city_n_1(X9)))),
        some(E2,and(
            event_n_1(E2),and(and(
            live_v_1(E2),
            agent_r_1(E2,X6)),
            in_r_1(E2,X9)))))))).
\end{verbatim}
}

\subsection{Inference Process} 
\label{ssec:inference}
The goal here is to identify and to prove the logical relation between
two input texts, represented formally as FOLE formulas, with respect to
the problem-relevant background knowledge. We are interested in
answering the question whether the relation is an entailment, a
contradiction, or whether maybe hypothesis $H$ provides just new
information with respect to text $T$, i.e., is informative (see
Definition~\ref{def:rte}). As proposed by~\citeR{bos2005recognising}, in
order to give a clear answer to these questions, it is sufficient to
perform sequentially the following three tests on text $T$, hypothesis
$H$, and background knowledge $BK$, all given as FOLE formulas:
\begin{enumerate}
\item Consistency Test: Check whether $T$, $H$, and $BK$ are mutually
  consistent in terms of first-order logic, i.e., if $$T\wedge H\wedge
  BK$$ is satisfiable (i.e., there exists some first-order model for
  it).
\item Informativity Test: Check whether $H$ contains new information
  (first-order assertions) which cannot be entailed from $T$ and $BK$,
  i.e., if $$T\wedge BK \rightarrow H$$ is not valid, or put another
  way, if $$\neg(T\wedge BK \rightarrow H)\equiv T\wedge BK \wedge
  \neg H$$ is satisfiable.
\item Entailment Test: Check whether $H$ is a semantic consequence of
  $T$ and $BK$, i.e., if $$\{T,BK\}\models H$$ holds. Since in the
  first-order logic, $\{T,BK\}\models H$ holds if and only if $T\wedge
  BK \rightarrow H$ is valid and $T\wedge BK \rightarrow \neg H$ is
  not valid (according to the semantic version of the deduction
  theorem, see, e.g.,~\citeR{boolos1974computability}), it suffices to
  show that $$T\wedge BK \wedge \neg H \mbox{ and } T\wedge BK \wedge
  H$$ are unsatisfiable and satisfiable, respectively.
\end{enumerate}

Observe that the three logical relations between $T$, $H$, and $BK$ we
are considering here are mutually exclusive and partly
complementary. Therefore the testing can be reduced to the first two
tests, i.e, we need to perform only the consistency and the
informativity tests. Figure~\ref{fig:diagram} shows how a given RTE
problem is solved by applying only these two tests.
\begin{figure}
\begin{center}
\includegraphics[width=13cm]{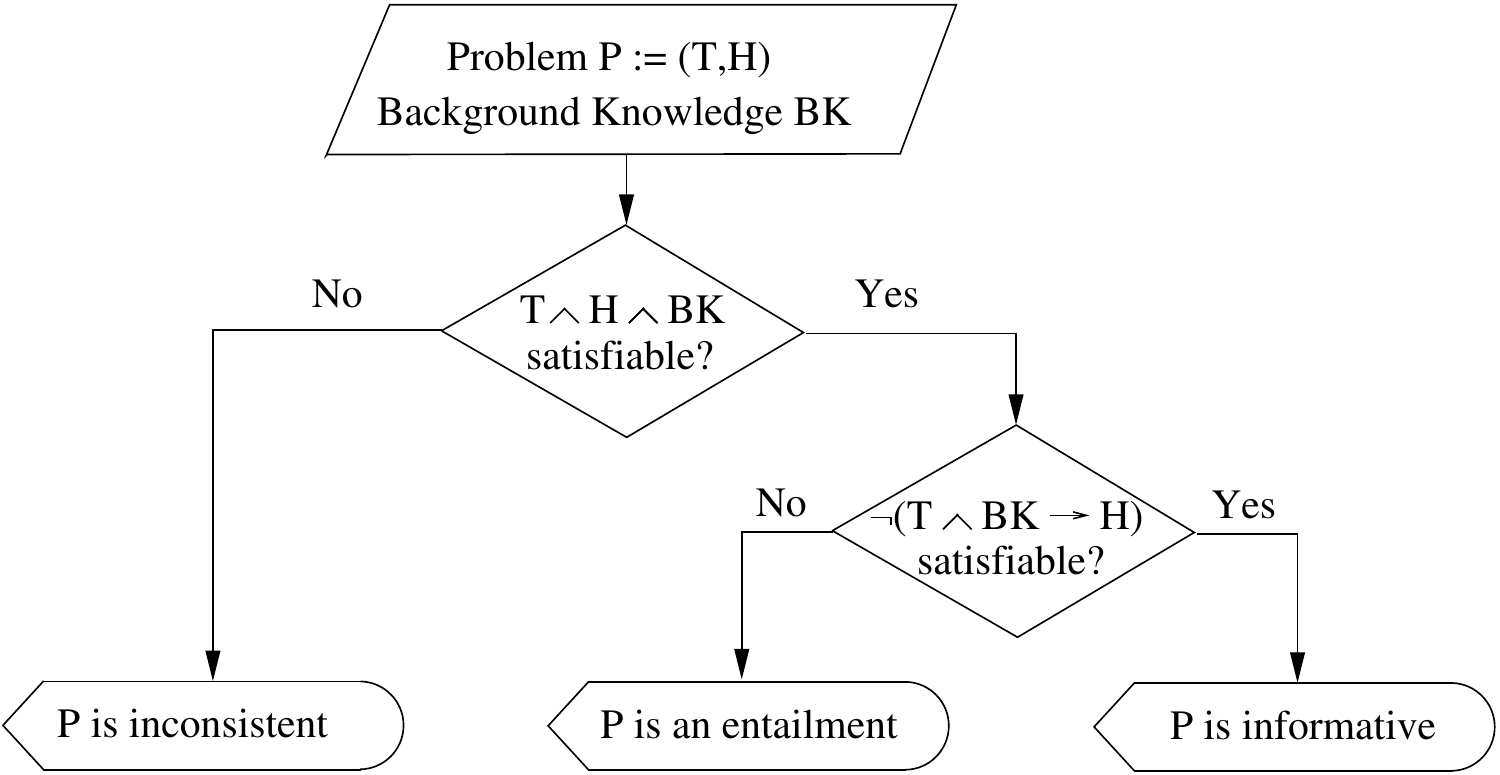}
\caption{Decision diagram for logical inference}
\label{fig:diagram}
\end{center}
\end{figure}

It is a well-know fact that our tests are within the first-order logic
{\em undecidable}. Thus, in order to check efficiently which type of a
logical relation for the input problem holds, we use two kinds of
automated reasoning tools:
\begin{itemize}
\item {\em Finite model builders}: Mace 2.2
  by~\citeA{mccune2001mace2}, Paradox 3.0 by~\citeA{claessen2003new},
  and Mace4 by~\citeA{mccune2003mace4}, and
\item {\em First-order provers}: Bliksem 1.12
  by~\citeA{nivelle2003bliksem}, Otter 3.3 by~\citeA{mccune2003otter},
  Vampire 8.1 by~\citeA{riazanow2002design}, and Prover9
  by~\citeA{mccune2009prover}.
\end{itemize}
While theorem provers are designed to prove that a formula is valid
(i.e., the formula is true for any admissible interpretation), they are
generally not good at deciding that a formula is not valid. On the
contrary, model builders are designed to show that a formula is true for
at least one interpretation. The experiments with different inference
machines show that solely relying on theorem proving is in most cases
insufficient due to low recall. Indeed, our inference process
incorporates model building as a central part of the inference process.
Similar to~\citeA{bos2003exploring}, \citeA{bos2005recognising},  
and~\citeA{curran2007lingu}, we exploit the complementarity of model
builders and theorem provers by applying them {\em in parallel} to the
first-order formulas specified by the first two tests given above in
order to tackle with its undecidability more efficiently. More
specifically, for a given test, the theorem prover attempts to prove
whether the input formula $F$ is valid whereas the model builder
simultaneously tries to find a finite, first-order model for the
negation of the input formula $F$. For more clarity, in the decision
nodes in Figure~\ref{fig:diagram}, the input formula is depicted only in
the form the model builder becomes it, i.e., in negated form as
satisfiability problem.

All reasoning machines were developed to deal with inference problems
stated in FOLE. They are successfully integrated into our system for
RTE. To this end, we use a translation from FOLE into the formats
required by the inference tools. Furthermore, the user can specify via
the user interface which inference machines (i.e., which theorem
prover and which model builder) should be used by the inference
process. The tests have shown that the efficiency and the success of
solving a given RTE problem depend much on the inference machines
chosen for it. Thus, it is advisable to run simultaneously on the same
RTE problem more than one theorem prover and more than one model
builder.

\subsection{User Interface} 
The results of the syntactic processing, semantic construction, and
logical inference like, e.g., HPSG and MRS structures, FOLE formulas,
first-order models and proofs, integrated background knowledge, and
other detailed information are presented to the user within a
dedicated GUI. With its help, one can further customize and control
both the semantic and logical analysis, e.g., choose the input text or
the background knowledge source, inspect the results of shallow-deep
analysis, or select and customize inference machines.

\section{Improving the Inference Quality}
\label{sec:quality}
The inference process of RTE needs high-quality background knowledge
to support its proofs. In particular, this will improve the precision
and the success rate of the inference process making the result much
more conforming with real-world expectations. However, with increasing
number of background knowledge axioms the search for finite
first-order models may become more time-consuming. Thus, only
knowledge relevant for the problem should be considered in the
inference process.

The integration of many ontological sources is, in general, a
difficult but as argued before a very important task. First of all,
the semantics of all concepts, individuals, and relations must be
preserved across the various sources. In this section we present and
analyze formally a new graph-based technique for integration of
concepts and individuals from ontologies based on the hierarchy of
WordNet~\cite{fellbaum1998wordnet}. Our results show that a
fine-grained and consistent knowledge coming from diverse sources (and
domains) is a necessary condition determining the correctness and
traceability of results. Moreover, our RTE application performs
significantly better when a substantial amount of problem-relevant
knowledge has been integrated into the underlying inference process.

\subsection{Sources of Background Knowledge}
\label{ssec:knowledge}
Our RTE system supports the extraction of background knowledge from
different kinds of sources. It searches for and supplies
problem-relevant knowledge automatically as first-order axioms and
integrates them into the RTE problem.

We use WordNet 3.0 as a lexical database for synonymy, hyperonymy, and
hyponymy relations. It helps the RTE system to detect an entailment
between lexical units from the text and the hypothesis. It serves also
as a database for individuals but rather a very small one when
compared to the second source. For efficiency purposes, it was
preprocessed and integrated directly into the module for logical
inference (see Figure~\ref{fig:semantics}). Conceptually, the
hyperonymy/hyponymy relation in WordNet spans a directed acyclic graph
(DAG) with the root node {\tt
  entity}~\cite{fellbaum1998wordnet,suchanek2008yago}. This means that
there are nodes (i.e., concepts or individuals) in the WordNet graph
that are direct hyponyms of more than one concept. For that reason the
knowledge axioms which are generated later from the WordNet graph may
induce inconsistencies between the input problem formulas and the
extracted knowledge. This can be very harmful for the further
inference process. In Section~\ref{ssec:integration} we discuss this
problem more formally and present several strategies that can deal
with this restriction.

YAGO~\cite{suchanek2008yago} is a large and arbitrarily extensible
ontology with high precision and quality which we use in our system as
the second source of ontological knowledge. Its core was assembled
automatically from the category system and the infoboxes of Wikipedia,
and combined with taxonomic relations from
WordNet~\cite{suchanek2008yago}. Similar to WordNet, the concepts and
individuals hierarchy of YAGO spans a DAG. Thus, we must also proceed
carefully when integrating data from that source into the RTE problem,
(see Section~\ref{ssec:integration}). For accessing YAGO, we use a
dedicated query processor (see Figure~\ref{fig:semantics}) with its
own query language, similar to that of~\citeA{suchanek2008yago}. The
query processor first normalizes the shorthand notation of the query,
and after translating it into SQL, sends it to the MySQL-Server with
YAGO database. The incoming results are first preprocessed by the
query processor, so that only those concepts are sent back for
integration which are consistent with WordNet concept hierarchy, i.e.,
which include the prefix {\tt wordnet\_}.

Furthermore, OpenCyc 2.0~\cite{matuszek2006introduction} can also be
used as a background knowledge source. The computation of axioms for a
given problem is solved using a variant of Lesk's WSD
algorithm~\cite{banerjee2002adapted}. Axioms of generic knowledge from
our experimental knowledge source cover the semantics of possessives,
active-passive alternation, and spatial knowledge (e.g., that Tower
Bridge is located in London). Finally, our experimental
presuppositional knowledge base includes axioms covering English words
and phrases triggering presuppositions (see
Section~\ref{ssec:presuppositions}).

\subsection{Combining Knowledge from Various Sources}
\label{ssec:integration}
In the following we describe the {\em three-phase} integration
procedure that we use to find and to combine individuals and concepts
from YAGO with those from WordNet in order to support RTE. In
particular, we show how we can combine problem-relevant individuals
and concepts from YAGO with those from WordNet so that the consistency
of background knowledge axioms is preserved whereas the original
logical properties of the input RTE problem do not change. More
specifically, Since the input problem itself may be consistent and we
want to prove it, the knowledge we integrate into it must not make it
inconsistent.

To make our presentation as comprehensible and self-explanatory as
possible, we make use of a small RTE problem which we augment with
relevant background knowledge axioms in the course of this section. We
want to prove that the text $T$:
\begin{center} 
  \parbox{12cm}{\em Leibniz was a famous German philosopher and
    mathematician born in Leipzig. Thomas reads his philosophical
    works while waiting for a train at the station of Bautzen.}
\end{center}
entails the hypothesis $H$: 
\begin{center}
\parbox{12cm}{\em Some works of Leibniz are read in a town.}
\end{center}
In order to prove the entailment above, we must know, among other
things, that {\em Bautzen} is a town. We assume that no information
about {\em Bautzen}, except that it is a named entity (i.e., an
individual), were yielded by the deep-shallow semantic
analysis. However, we expect that this missing information can be
found in the external knowledge sources.  The search for relevant
background knowledge begins after the first-order representation of
the problem is computed and translated into FOLE (see
Section~\ref{sec:framework}). At this stage, the RTE problem has
already undergone syntactic processing, semantic construction, and
anaphora resolution which together have generated a set of semantic
representations of the problem in form of MRS. The translation of the
specified MRS into FOLE for the hypothesis $H$ from our example above
produces the following formula with a neo-Davidsonian event
representation by~\citeA{Dowty1989}:

\begin{verbatim}
some(X3,and(
    work_n_2(X3),
    some(X7,and(and(
        named_r_1(X7),and(
        leibniz_per_1(X7),
        of_r_1(X3,X7))),
        some(X8,and(
            town_n_1(X8),
            some(E2,and(
                event_n_1(E2),and(and(
                read_v_1(E2),
                patient_r_1(E2,X3)),
                in_r_1(E2,X8)))))))))).
\end{verbatim}

As mentioned before, the integration procedure is composed of three
phases. In the first phase we search for problem-relevant knowledge in
WordNet, whereas in the second phase we look for additional knowledge
in YAGO which we combine afterwards with that found in the first
phase. Finally, in the third phase we generate from the knowledge, we
have already found and successfully combined, background knowledge
axioms and integrate them into the set of FOLE formulas representing
the input RTE problem.

\subsubsection{Phase I: Integration of WordNet}
\label{sssec:firstphase}
At the beginning, we list all predicates, i.e., concepts and
individuals from the input FOLE formulas. They will be used for the
search in WordNet.  In the current implementation we consider as {\em
  search predicates} all nouns, verbs, and named entities, together
with their sense information which is specified for each predicate by
the last number in the predicate name, e.g., sense $2$ in {\tt
  work\_n\_2}. In WordNet, the senses are generally ordered from most
to least frequently used, with the most common sense numbered {\tt
  1}. Frequency of use is determined by the number of times a sense
was tagged in the various semantic concordance texts used for
WordNet~\cite{fellbaum1998wordnet}.  Senses that were not semantically
tagged follow the ordered senses. For our small RTE problem we can
select as search predicates, e.g., {\tt work\_n\_2}, {\tt read\_v\_1},
or {\tt leibniz\_per\_1}. It is important for the integration that the
sense information computed during the semantic analysis matches
exactly the senses used by external knowledge sources. This ensures
that the semantic consistency of background knowledge is preserved
across the semantic and logical analysis. However, this seems to be an
extremely difficult task, which does not seem to be solved fully
automatically yet by any current word sense disambiguation
technique. Since in WordNet but also in
ERG~\cite{flickinger2000building} the senses are ordered by their
frequencies, we take for semantic representations generated during the
semantic analysis the most frequent concepts from ERG.

Having identified the search predicates, we try to find them in
WordNet and, by employing both the hyperonymy/hyponymy and synonymy
relations, we obtain a {\em knowledge graph} $G_{W}$. A small fragment
of such a knowledge graph for text $T$ of our example is given in
Figure~\ref{fig:wordnetdag}. In general, $G_{W}$ is a DAG with leaves
represented by the search predicates, whereas its inner nodes and the
root are given by concepts coming from WordNet. The directed edges in
$G_{W}$ correspond to the hyponym relations, e.g., in
Figure~\ref{fig:wordnetdag} named entity {\tt leipzig} is a hyponym of
concept {\tt city}. Note that in the opposite direction they describe
the hyperonym relations, e.g., concept {\tt city} is a hyperonym of
named entity {\tt leipzig}. Each synonymy relation is represented by a
{\em complex node} composed of synonymous concepts $C_1, ...,C_n$
induced by the relation (i.e., all concepts represented by a complex
node belong to the same synset in WordNet), e.g., the complex node
with concepts {\tt district} and {\tt territory} in
Figure~\ref{fig:wordnetdag}.
\begin{figure}
\begin{center}
\includegraphics[width=12cm]{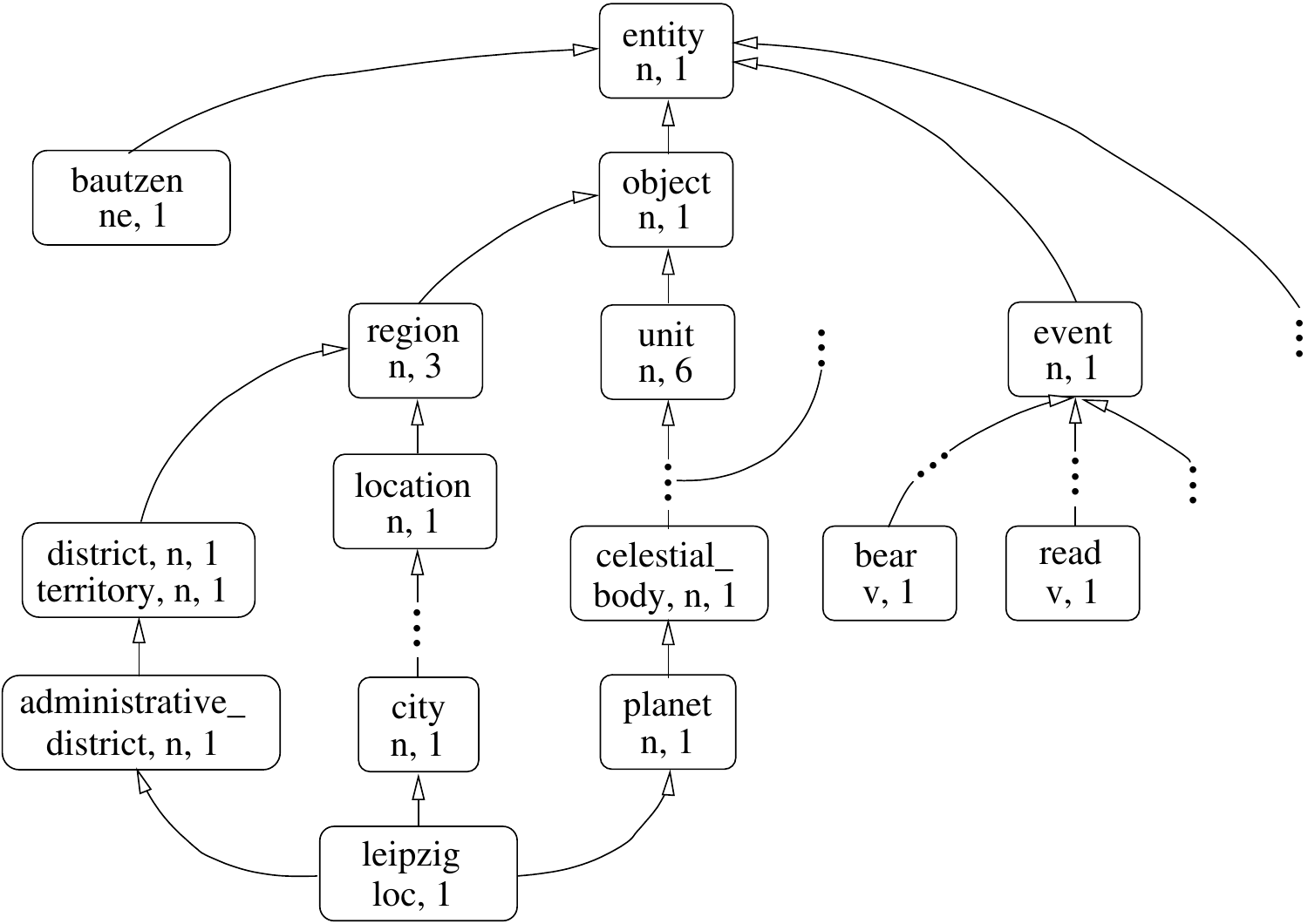}
\caption{Fragment of knowledge graph $G_{W}$ resulting from the search
  in WordNet}
\label{fig:wordnetdag}
\end{center}
\end{figure}

Furthermore, it can be seen in Figure~\ref{fig:wordnetdag} that the
leaf representing individual {\tt leipzig} has more than one direct
hyperonym, i.e., there are three hyponym relations for leaf {\tt
  leipzig} with concepts {\tt administrative\_district}, {\tt city},
and {\tt planet}. As already indicated in Section~\ref{sec:framework},
this property of graph $G_{W}$ may cause inconsistencies when the
background knowledge axioms are later generated from it and integrated
into the input FOLE formulas. We address this problem more carefully
now. We begin with the explanation how the background knowledge axioms
are generated. The method we use for it is an extension of the
heuristic presented in~\citeA{curran2007lingu} and can be defined
formally as follows:
\begin{definition}
\label{def:axioms}
There are three basic types of background knowledge axioms: {\tt
  IS-A}, {\tt IS-NOT-A}, and {\tt IS-EQ}. They can be generated from a
given knowledge graph $G$ by traversing its nodes and edges and
applying the following rules:
\begin{enumerate}
\item Let $U$ and $V$ be two different (complex) nodes from $G$, and
  $C_i$ and $C_j$ two arbitrary concepts or individuals represented by
  $U$ and $V$, respectively. If $C_i$ is a direct hyponym of $C_j$
  (i.e., there is an edge from $U$ to $V$ in $G$), then generate an
  {\tt IS-A} axiom $\forall x (C_i(x) \rightarrow C_j(x))$.
\item Let $V$ be a (complex) node from $G$ and $U=\{U_{1},...,U_{n}\}$
  the set of all children of $V$ in $G$. All concepts and individuals
  represented by (complex) nodes from $U$ are direct hyponyms of the
  concepts or individuals represented by $V$. The sets of concepts and
  individuals represented by nodes from $U$ are pairwise disjoint. For
  every pair $(i,j)$ such that $i=1,..,n-1$ and $j=i,...,n$ generate
  an {\tt IS-NOT-A} axiom $\forall x (C_i(x) \rightarrow \neg C_j(x))$
  where $C_i$ and $C_j$ are two arbitrarily chosen concepts or
  individuals represented by $U_{i}$ and $U_{j}$, respectively.
\item Let $U$ be some complex node from $G$ and $C=\{C_1,..,C_n\}$ a
  set of synonymous concepts represented by $U$. For every pair
  $(i,j)$ such that $i=1,..,n-1$ and $j=i,...,n$ generate an {\tt
    IS-EQ} axiom $\forall x (C_i(x) \leftrightarrow C_j(x))$.
\end{enumerate}
\end{definition}
Notice that since all concepts or individuals represented by a given
complex node are synonymous, Rule 1 and Rule 2 from
Definition~\ref{def:axioms} need to be applied only to one arbitrarily
chosen concept or individual represented by that node. By applying the
rules from Definition~\ref{def:axioms} to graph $G_{W}$ from
Figure~\ref{fig:wordnetdag} the following axioms can be generated (not
a complete list here):
\begin{eqnarray}
\mbox{{\tt IS-A}:}  & & \forall x (object\_n\_1(x) \rightarrow
entity\_n\_1(x)) \nonumber \\
\mbox{{\tt IS-NOT-A}:} & & \forall x (region\_n\_3(x) \rightarrow \neg
unit\_n\_6(x)) \nonumber \\
\mbox{{\tt IS-EQ}:} & & \forall x (district\_n\_1(x) \leftrightarrow 
territory\_n\_1(x)) \nonumber
\end{eqnarray}

Furthermore, observe that the set of all background knowledge axioms
$A_{K}$ generated for knowledge graph $G_{W}$ according to
Definition~\ref{def:axioms} is a finite set of first-order sentences
(i.e., formulas of first-order logic without free variables)
restricted to unary predicate symbols and no function symbols. This
monadic fragment of the first-order logic is know to be decidable for
logical validity~\cite{boolos1974computability}. However, in order to
show the consistency (i.e., the absence of contradictions) of $A_{K}$
generated for an arbitrary knowledge graph $G_{W}$, we need to show
that $A_{K}$ is {\em satisfiable}. We conjecture here that every
$A_{K}$ is satisfiable in some finite model. To prove this, one need
to give, for instance, some method which describes formally the
construction of a finite model for every set of axioms $A_{K}$
generated for an arbitrary knowledge graph $G_{W}$ according to
Definition~\ref{def:axioms}. Since the predicate calculus of first
order is complete~\cite{goedel1930vollstaendigkeit}, one can also
proceed in a purely syntactical way by showing that there is no
formula $f$ such that both $f$ and its negation are provable from
axioms $A_{K}$ under its associated deductive system. In the further
research we examine our conjecture more carefully, i.e., we will try
either to prove it or to deliver some counterexample.

\begin{proposition}
\label{t:integration}
Let $F$ be a set of FOLE formulas representing semantically an RTE
problem $P$, and $A_K$ a set of background knowledge axioms computed
for $P$ according to Definition~\ref{def:axioms}. Furthermore, let $f$
be a formula like $\exists x (C_{k}(x)\wedge...)$ from $F$ and
$A=\{A_{1},A_{2}, A_{3}\}=\{\forall x(C_i(x) \rightarrow \neg C_j(x)),
\forall x(C_k(x) \rightarrow C_i(x)), \forall x(C_k(x)\rightarrow
C_j(x))\}$ a set of one {\tt IS-NOT-A} and two {\tt IS-A} axioms,
respectively. If $A \subseteq A_K$, then $F \cup A_K$ is inconsistent.
\end{proposition}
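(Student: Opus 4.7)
The plan is to prove inconsistency of $F \cup A_K$ by a short semantic derivation: I assume, for contradiction, that there is a first-order model $M$ satisfying every formula of $F \cup A_K$, pick a witness for the existential quantifier in $f$, and then chase the implications $A_1, A_2, A_3$ to force both $C_j(a)$ and $\neg C_j(a)$ for the same domain element $a$. Because $A \subseteq A_K$, the three axioms must hold in $M$ alongside $f$; the remaining axioms of $A_K$ are irrelevant to the argument and can be ignored.

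Concretely, first I would use the fact that $f \in F$ has the form $\exists x\,(C_k(x) \wedge \ldots)$, so in $M$ there is some element $a$ of the domain with $C_k(a)$ (the further conjuncts elided by ``$\ldots$'' are existentially bound by the same $x$ and play no role). Instantiating $A_2 = \forall x\,(C_k(x) \rightarrow C_i(x))$ at $a$ and applying modus ponens yields $C_i(a)$; instantiating $A_3 = \forall x\,(C_k(x) \rightarrow C_j(x))$ at $a$ yields $C_j(a)$. Instantiating $A_1 = \forall x\,(C_i(x) \rightarrow \neg C_j(x))$ at $a$ and combining with $C_i(a)$ gives $\neg C_j(a)$, contradicting $C_j(a)$. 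Hence no such $M$ exists and $F \cup A_K$ is inconsistent.

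There is no genuine technical obstacle here: the derivation is a three-step universal-instantiation and modus-ponens chase, and the monadic, function-free form of the axioms means no substitution or unification subtleties arise. The only care needed in the write-up is to justify that the witness $a$ for the existential in $f$ can be used simultaneously with the three universally quantified axioms (which is immediate in standard first-order semantics), and to note that the conjuncts hidden behind ``$\ldots$'' do not affect the choice of $a$. The actual content of the proposition, as I see it, is not the derivation itself but its diagnostic role: it pinpoints the precise DAG configuration in $G_W$ that makes the naive axiomatization of Definition~\ref{def:axioms} dangerous, namely a leaf (such as \texttt{leipzig} in Figure~\ref{fig:wordnetdag}) whose two ancestors end up related by an \texttt{IS-NOT-A} edge, and it thereby motivates the subsequent phases of the integration procedure.
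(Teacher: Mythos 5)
Your proof is correct, but it takes a different route from the paper's. The paper argues syntactically: it transforms $\{f\} \cup A$ into clause form and appeals to resolution refutation, asserting that the empty clause ``can clearly be derived'' and then concluding by monotonicity ($\{f\} \cup A \subseteq F \cup A_K$, so unsatisfiability of the subset forces unsatisfiability of the whole). You argue semantically: assume a model, extract a witness $a$ with $C_k(a)$ from the existential in $f$, and chase $A_2$, $A_3$, $A_1$ by universal instantiation and modus ponens to obtain both $C_j(a)$ and $\neg C_j(a)$. The two arguments establish the same fact, and both finish with the same monotonicity step. Your version is arguably the more careful of the two: the paper's phrase ``equivalent conjunctive normal form'' glosses over the Skolemization of the existential quantifier in $f$ (which preserves satisfiability but not logical equivalence), and it leaves the actual resolution derivation entirely to the reader, whereas your derivation is explicit at every step. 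What the paper's formulation buys is a closer match to how the system actually operates (the inference machines it invokes work on clausal form), which is presumably why the authors phrased it that way. Your closing observation about the diagnostic role of the proposition --- identifying the DAG configuration in $G_W$ that makes the naive axiomatization unsafe --- matches the use the paper makes of it in motivating the two repair strategies that follow.
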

\begin{proof}
  To show the inconsistency of $F \cup A_K$, we need to prove its
  unsatisfiability. Note that the three axioms from $A$ reflect the
  situation depicted in Figure~\ref{fig:proof}. To give a proof, we
  show first that $\{f\} \cup A$ is unsatisfiable. To this end we
  transform $\{f\} \cup A$ into an equivalent conjunctive normal
  form. The resulting set of clauses $\{\{f\}, \{A_{1}\},\{A_{2}\},
  \{A_{3}\}\}$ is unsatisfiable if and only if there exists a
  derivation of the empty clause using alone the resolution rule. It
  is clear that the empty clause can be derived, showing that $\{f\}
  \cup A$ is unsatisfiable and since $\{f\} \cup A \subseteq F \cup
  F_K$, the claim follows.
\end{proof}
\begin{figure}
\begin{center}
\includegraphics[width=4.5cm]{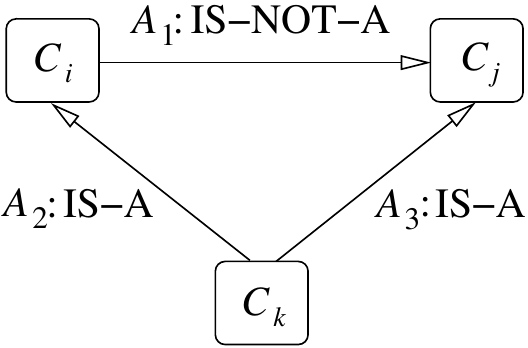}
\caption{Three background knowledge axioms leading into inconsistency}
\label{fig:proof}
\end{center}
\end{figure}

Thus, according to Proposition~\ref{t:integration}, we cannot in
general integrate all background knowledge axioms $A_{K}$ generated
from knowledge graph $G_{W}$ by the rules given in
Definition~\ref{def:axioms} into the RTE problem when its original
logical property (i.e., consistency or inconsistency) has to be
preserved. To deal with that problem, we propose two strategies:
\begin{enumerate}
\item Only Rule 1 and Rule 3 from Definition~\ref{def:axioms} are used
  for the generation of knowledge axioms from knowledge graph
  $G_{W}$.
\item Some edges from knowledge graph $G_{W}$ are removed, so that
  afterwards each concept or individual from $G_{W}$ is hyponym of
  concept(s) of at most one (complex) node, i.e., every child node in
  $G_{W}$ can have only one father node. Thus, after the deletion of
  edges is done, $G_{W}$ becomes a directed tree and all rules from
  Definition~\ref{def:axioms} are applied to it.
\end{enumerate}
Both strategies can cause some loss in effectiveness of the entire RTE
inference process. By using the first strategy, no {\tt IS-NOT-A}
axioms are generated and the situation described in
Proposition~\ref{t:integration} does not hold. However, the generated
background knowledge is not as precise as before (there are no
uniqueness constraints for concepts). The elimination of conflicting
edges from $G_{W}$ by the second strategy results in loss of
knowledge, too. For instance, in Figure~\ref{fig:proof} either the edge
representing axiom $A_{2}$ or the edge representing axiom $A_{3}$ will
be removed. To overcome this restriction and make use of all knowledge
from $G_{W}$, we could integrate all available hyponym relations into
the RTE problem separately, one after the other. Unfortunately, this
would result in many parallel entailment problems (one for each
reading), which we must solve and evaluate separately. Furthermore, we
observed that for now it is difficult to automate the task for
selecting edges for removal from $G_{W}$.

In our implementation we follow the second strategy by which we
transform knowledge graph $G_{W}$ into knowledge tree $T_{K}$ with
root node {\tt entity}, the most general concept in
WordNet~\cite{fellbaum1998wordnet}.  Currently, the edges for removal
can be selected either manually by the analyst from the list of
proposals made by the application, or automatically by leaving only
concepts with the most frequent senses.  Here, we use a variant of
Lesk's WSD algorithm~\cite{banerjee2002adapted}.
Figure~\ref{fig:wordnet} shows a fragment of tree $T_{K}$ for our
example.
\begin{figure}[h]
\begin{center}
\includegraphics[width=10cm]{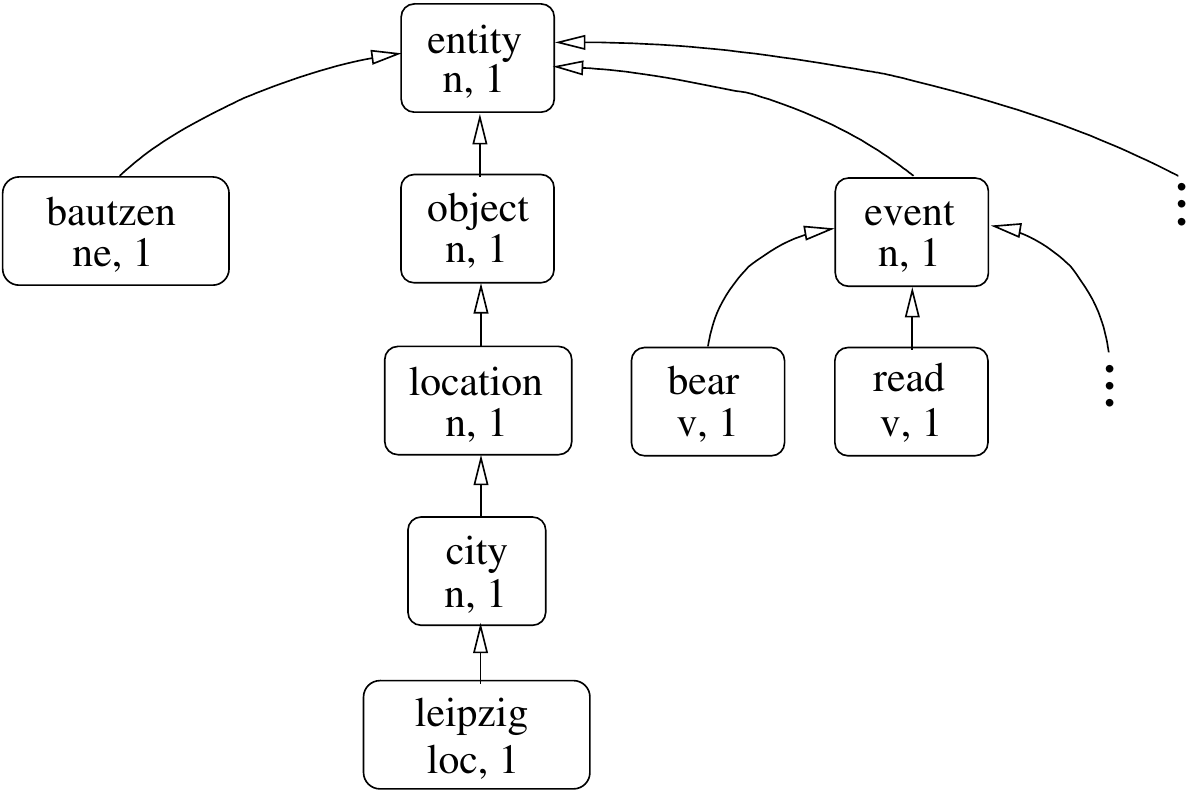}
\caption{Fragment of knowledge tree $T_{K}$ after optimization}
\label{fig:wordnet}
\end{center}
\end{figure}
The construction of the tree was optimized so that only those concepts
from $G_{W}$ appear in $T_{K}$ which are directly relevant for the
inference problem, e.g., only search predicates can serve as leaves in
$T_{K}$, or every non-branching node between two other nodes is
removed. Hence, all knowledge which will not add any inferential power
to the process is removed from $T_{K}$.

One can see in Figure~\ref{fig:wordnet} that not all search predicates
were recognized precisely enough during the first phase. More
specifically, the named entity {\tt bautzen} was not classified as a
town as we would expect it. Since a suitable individual was not found
in WordNet, the named entity {\tt bautzen\_ne\_1} was assigned
directly to the root of tree $T_{K}$. It is clear that without having
more information about {\tt bautzen}, we {\em cannot} prove the
entailment.

\subsubsection{Phase II: Integration of YAGO}
\label{sssec:secondphase}
In this phase we consult YAGO about search predicates that were not
recognized in the first phase. We formulate for each such predicate an
appropriate query and send it to the query processor (see
Figure~\ref{fig:semantics}). To this end, we use relation {\tt type},
one of the build-in ontological relations of YAGO. For our small RTE
problem, we ask YAGO with a query {\tt bautzen type ?} of what type
(or in YAGO nomenclature: of what class) the named entity {\tt
  bautzen} is. If it succeeds, it returns knowledge graph $G_{Y}$ with
WordNet concepts which classify the named entity from the
query. Figure~\ref{fig:dag} depicts graph $G_{Y}$ for our example. We
can see that {\tt bautzen} was now classified more precisely, among
other things, as a town.

In general, each graph $G_{Y}$ is a DAG composed of partially
overlapping paths leading (with respect to the hyperonymy relation)
from some root node (i.e., the most general concept in $G_{Y}$, e.g.,
node {\tt object} in Figure~\ref{fig:dag}) to the leaf representing
the search predicate (e.g., the complex node {\tt bautzen} in
Figure~\ref{fig:dag}). Observe that there is one and only one leaf
node in every graph $G_{Y}$. Since the result of every YAGO-query is
in general represented by a DAG, we cannot integrate it completely
into the knowledge tree $T_{K}$ without violating the original logical
properties of the input problem (see discussion above). According to
the leaf of $G_{Y}$ in Figure~\ref{fig:dag}, the named entity {\tt
  bautzen} can also be classified as an asteroid or an administrative
district.
\begin{figure}
\begin{center}
\includegraphics[width=15.2cm]{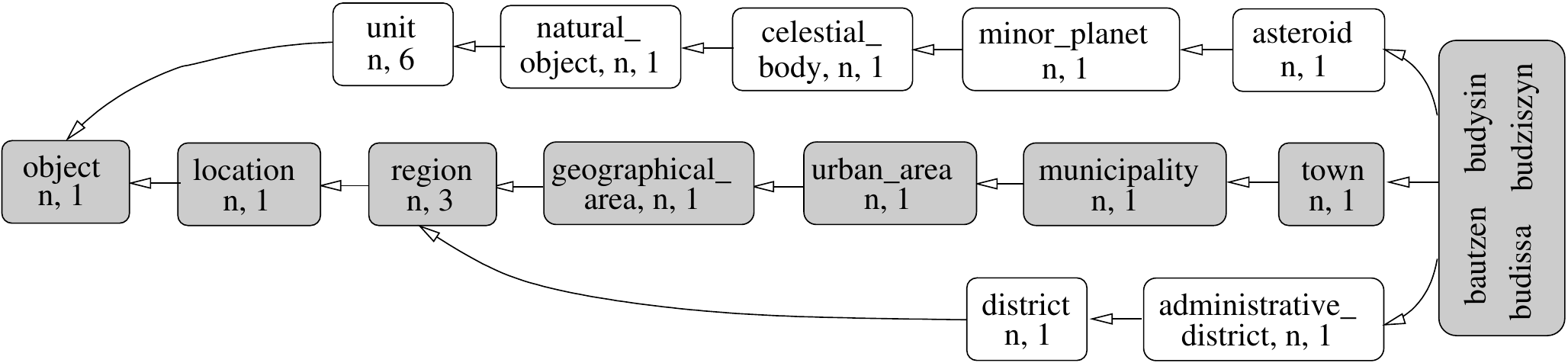}
\caption{Knowledge graph $G_{Y}$ with results of two queries to YAGO}
\label{fig:dag}
\end{center}
\end{figure}

In order to preserve the correctness of results, we select for the
integration into knowledge tree $T_{K}$ only those concepts,
individuals, and relations from $G_{Y}$ which lay on the longest path
from the most general concept in $G_{Y}$ to one of the direct
hyperonyms of the leaf, and which has the most common nodes with tree
$T_{K}$ from the first phase. In Figure~\ref{fig:dag} the concepts and
individuals on the gray shaded path were chosen by our heuristic for
the integration into $T_{K}$.  After the path has been selected, it is
optimized and integrated into $T_{K}$. Figure~\ref{fig:integration}
depicts the knowledge tree $T_{K}$ after the gray shaded path from
Figure~\ref{fig:dag} was integrated into it.
\begin{figure}
\begin{center}
\includegraphics[width=10.7cm]{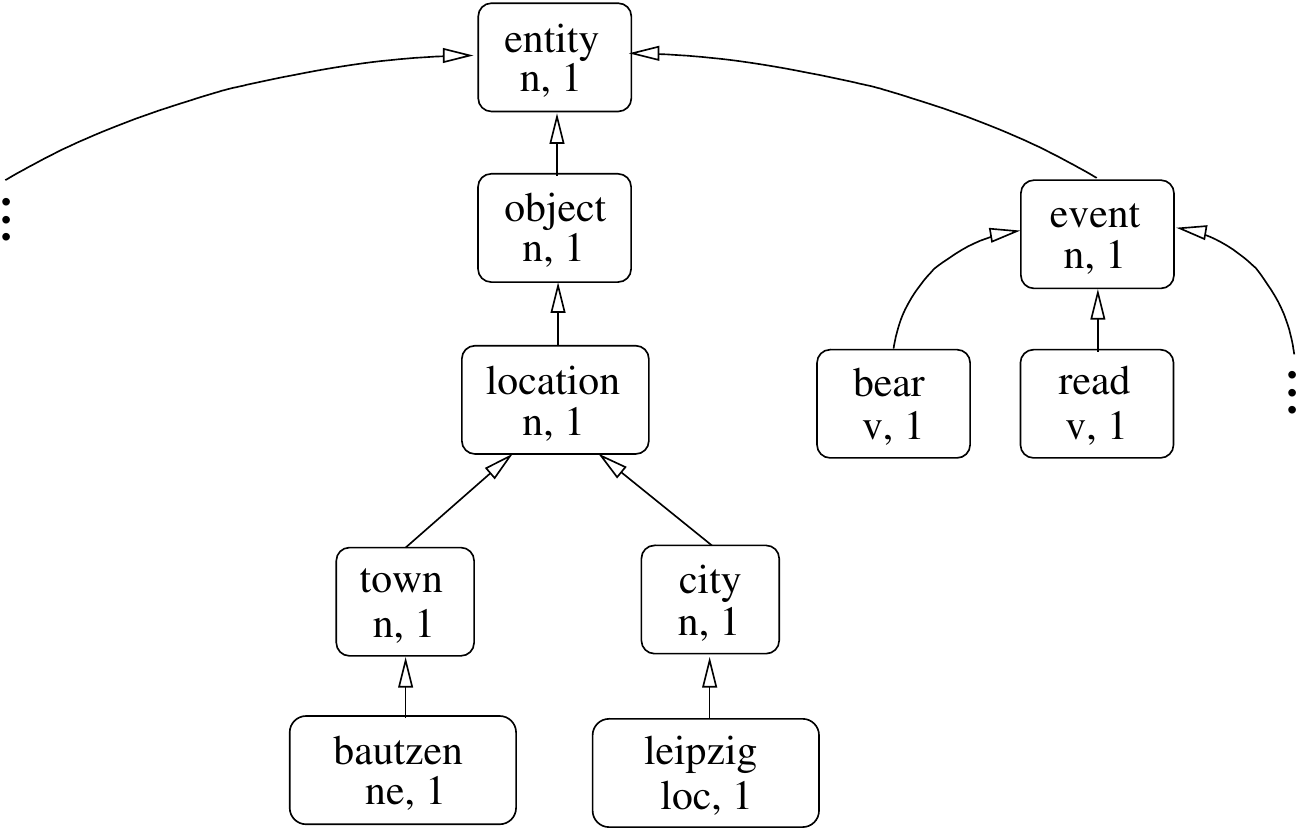}
\caption{Fragment of knowledge tree $T_{K}$ after integration of
  results from YAGO}
\label{fig:integration}
\end{center}
\end{figure}

The selection of a relevant path could also be done manually by some
analyst with sufficient knowledge about the problem. A fully automatic
selection turns out to be a much more difficult task. Another strategy
is to integrate all available paths separately, one after the
other. Here, however, this would result in three parallel entailment
problems, which we must solve and evaluate separately.

Observe finally that the integration of selected parts of the
knowledge graph $G_{Y}$ into tree $T_{K}$ is performed sequentially
for each search predicate which was not classified in the first phase
(note that each search generates its own knowledge graph $G_{Y}$).

Additionally to the first query to YAGO, we can also formulate a
second one like {\tt bautzen isCalled ?}, in which we ask what are the
names of the named entity in other languages. In Figure~\ref{fig:dag}
we can see four different names for this entity. This complementary
information can be combined afterwards into the FOLE formulas of the
RTE problem as new predicates, e.g.,
$$
...\exists x ((bautzen(x) \vee \mbox{\bf \em budysin}(x) \vee \mbox{\bf
  \em budissa}(x) \vee \mbox{\bf \em budziszyn}(x))\wedge ...)...
$$

\subsubsection{Phase III: Generation of Background Knowledge Axioms}
\label{sssec:thirdphase}
After the second phase of the integration procedure is finished and
the final knowledge tree $T_{K}$ has been computed, the background
knowledge axioms are generated from $T_{K}$ according to
Definition~\ref{def:axioms}. The resulting axioms are added into the
FOLE formulas of the input RTE problem. Such an extended input problem
is passed over to the inference process (see
Figure~\ref{fig:semantics}) and solved correspondingly. For the
knowledge tree given in Figure~\ref{fig:integration}, the following
axioms (here not a complete list) can be generated.

\begin{verbatim}
all(X, imp(city_n_1(X), location_n_1(X))).
all(X, imp(event_n_1(X), not(object_n_1(X)))).
\end{verbatim}

The axioms are in FOLE format and are rather self-explanatory and can
be interpreted as follows:
\begin{align} 
& \forall x (city\_n\_1(x) \rightarrow location\_n\_1(x)) \nonumber \\
& \forall x (event\_n\_1(x) \rightarrow \neg object\_n\_1(x)) \nonumber 
\end{align}

\subsection{Presupposition Resolution}
\label{ssec:presuppositions}
Many words and phrases trigger presuppositions which have clearly
semantic content important for the inference process. We try to
represent some of them explicitly. Our trigger-based mechanism uses
noun phrases as triggers, but it can be extended to verb phrases,
particles, etc. After a presupposition is triggered, the mechanism
resolves it, and integrates it as a new FOLE axiom into the RTE
problem. The automatic axiom generation is based on
$\lambda$-conversion and employs {\em abstract axioms} and a set with
possible {\em axiom arguments}. The axioms and their arguments are
still part of an experimental knowledge source (see Presuppositional
Knowledge in Figure~\ref{fig:semantics}). Here is an example for an
abstract axiom which allows for a translation from a noun phrase into
an intransitive verb phrase:
\begin{align}\label{eq:aaxiom}
  & \lambda P [\lambda R [\lambda S[ \nonumber \\
  & \mbox{\hspace{.3cm}}\forall x_1(\forall x_2 (P@x_1\wedge
  R@x_2\wedge
  nn\_r\_1(x_1,x_2) \\
  & \mbox{\hspace{.3cm}} \rightarrow \exists x_3(R@x_3\wedge \exists
  x_4 (S@x_4\wedge event\_n\_1(x_4) \wedge agent\_r\_1(x_4,x_3)))))
  \nonumber \\
  & ]]]. \nonumber
\end{align}

If text $T$ (expressed with FOLE formulas) contains a noun phrase
being a key for some entry in the set of possible axiom arguments,
then the arguments pointed by that key are applied to their abstract
axiom, and a new background axiom is generated. For a complex noun
phrase {\em price explosion} with its semantic representation
$price\_explosion\_n\_1$ the following arguments can be considered:
\begin{eqnarray}
  & \lambda x [explosion\_n\_1(x)]  \nonumber \\ 
  & \lambda x [price\_n\_1(x)] \nonumber \\ 
  & \lambda x [explode\_v\_1(x)] \nonumber
\end{eqnarray}
which after being applied to the abstract axiom~(\ref{eq:aaxiom})
produce the following background knowledge axiom:
\begin{align}\label{eq:axiom}
  & \forall x_1(\forall x_2 (explosion\_n\_1(x_1)\wedge
  price\_n\_1(x_2)
  \wedge nn\_r\_1(x_1,x_2) \nonumber \\
  & \rightarrow \exists x_3(price\_n\_1(x_3)\wedge \exists
  x_4(explode\_v\_1(x_4)\wedge event\_n\_1(x_4) \wedge 
  agent\_r\_1(x_4,x_3))))).
\end{align}
The presupposition axioms having complexity similar
to~(\ref{eq:axiom}) are first combined with the existing background
knowledge axioms and finally integrated as background knowledge into
the input RTE problem.

\section{Conclusion and Future Work}
\label{sec:conclusions}
In this paper a new adaptable, linguistically motivated system for RTE
was presented. Its deep-shallow semantic analysis, employing a
broad-coverage HPSG grammar ERG, was combined with a logical inference
process supported by an extended usage of external background
knowledge. The architecture of our system and the correctness of our
three-phase integration procedure were discussed in detail and the
functionality of the system was explained with several examples.

The system was successfully implemented and evaluated in terms of
success rate and efficiency. For now, it is still impossible to
measure its semantic accuracy as there is no corpus with gold standard
representations which would make comparison possible. Measuring
semantic adequacy could be done systematically by running the system
on controlled inference tasks for selected semantic phenomena.

Nevertheless, for our tests we used the RTE problems from the
development sets of the past RTE
Challenge~\cite{giampiccolo2007thirdpascal}. Our system with
successfully integrated background knowledge was able to solve
correctly about 67 percent of the RTE problems. This is comparable or
slightly better than the vast majority of other approaches from that
RTE Challenge which are based on some deep approach and combined with
logical inference. Unfortunately, it is still not so good as the
result of 72 percent achieved by~\citeA{tatu2006logicbased}. This can
be explained, among other things, by a more extensive and fine grained
usage of specific semantic phenomena, e.g., a sophisticated analysis
of named entities, in particular person names, distinguishing first
names from last names.

It is interesting to look at the inconsistent cases of the inference
process which were produced during the evaluation. They were caused by
errors in presupposition and anaphora resolution, incorrect syntactic
derivations, and inadequate semantic representations. They give good
indications for further improvements. Here, particularly the word
sense disambiguation problem will play a decisive role for matching
the set of senses of the semantic analyzers with multiple, and likely
different, sets of senses from the different knowledge resources. Once
tackled more precisely, it should decisively improve the success rate
of the system.  Moreover, the system presented here should be extended
with methods for word sense disambiguation, paraphrase detection, and
a better anaphora resolution within a discourse. We are considering
also the enhancing of the logical inference module with statistical
inference techniques in order to improve its performance and
recall. Since the strength but in some extent also the weakness of our
system lies in the difficulties regarding the computation of a
(nearly) full semantic representation of the input problem (see,
e.g.,~\citeA{burchardt2007semantic} for a good discussion), it might
to be recommended to integrate some models of natural language
inference which identifies valid inferences by their lexical and
syntactic features, without full semantic interpretation, like, e.g.,
the one proposed by~\citeA{maccartney2009extended}.

Furthermore, we intend to develop for the inference process some
temporal calculus supported by the temporal information from
YAGO. Here, the event calculus of~\citeA{shanahan1999event} can be
considered as a good starting point. Finally, it would be interesting
to extend the semantic analysis of our system, so that RTE problem
instances in languages other than English could be supported.


\vskip 0.2in
\bibliography{mIE-Literatur}
\bibliographystyle{theapa}

\end{document}